\documentclass[twocolumn]{article} 

\usepackage{preprint}

\usepackage{amsmath, amsthm, amssymb, amsfonts}
\usepackage{mathtools}

\usepackage[numbers,square]{natbib}
\usepackage[utf8]{inputenc}	
\usepackage[T1]{fontenc}	

\usepackage{graphicx}
\usepackage{multirow}
\usepackage{subcaption}
\usepackage{enumitem}
\usepackage[dvipsnames]{xcolor}
\usepackage{tikz}
\usepackage{siunitx}
\usepackage{algorithm}
\usepackage{algpseudocode}

\newtheorem{proposition}{Proposition}

\DeclareMathOperator*{\argmin}{arg\,min} 

\usepackage[colorlinks = true,
            linkcolor = purple,
            urlcolor  = blue,
            citecolor = cyan,
            anchorcolor = black]{hyperref}	
\usepackage{booktabs} 		
\usepackage{float}			

\definecolor{lime}{HTML}{A6CE39}
\DeclareRobustCommand{\orcidicon}{
	\begin{tikzpicture}
	\draw[lime, fill=lime] (0,0)
	circle [radius=0.16]
	node[white] {{\fontfamily{qag}\selectfont \tiny ID}};
	\draw[white, fill=white] (-0.0625,0.095)
	circle [radius=0.007];
	\end{tikzpicture}
	\hspace{-2mm}
}
\foreach \x in {A, ..., Z}{\expandafter\xdef\csname orcid\x\endcsname{\noexpand\href{https://orcid.org/\csname orcidauthor\x\endcsname}
			{\noexpand\orcidicon}}
}

\usepackage{authblk}

\title{Weakly Convex Ridge Regularization for 3D\\
Non-Cartesian MRI Reconstruction\protect}

\author[1]{German Shâma Wache\orcidA{}}
\author[2,3]{Chaithya G R\orcidB{}}
\author[2,3,4]{Asma Tanabene}
\author[1,$\dagger$]{Sebastian Neumayer\orcidD{}}

\affil[1]{Faculty of Mathematics, Chemnitz University of Technology, Germany}
\affil[2]{NeuroSpin, Fr\'ed\'eric Joliot Institute for Life Sciences, CEA Paris-Saclay, France}
\affil[3]{MIND, Inria Saclay Centre, France}
\affil[4]{Siemens Healthineers, France}

\begin{document}

\maketitle
\let\thefootnote\relax\footnotetext{\texttt{$\prescript{\dagger}{}{}$correspondence:sebastian.neumayer\\@math.tu-chemnitz.de}}


\begin{abstract}
While highly accelerated non-Cartesian acquisition protocols significantly reduce scan time, they often entail long reconstruction delays.
Deep learning–based reconstruction methods can alleviate this, but often lack stability and robustness to distribution shifts.
As an alternative, we train a rotation-invariant weakly convex ridge regularizer (WCRR).
The resulting variational reconstruction approach is benchmarked against state-of-the-art methods on retrospectively simulated data and (out-of-distribution) on prospective GoLF-SPARKLING and CAIPIRINHA acquisitions.
Our approach consistently outperforms widely used baselines and achieves performance comparable to Plug-and-Play reconstruction with a state-of-the-art 3D DRUNet denoiser, while offering substantially improved computational efficiency and robustness to acquisition changes.
In summary, WCRR unifies the strengths of principled variational methods and modern deep learning–based approaches.

\keywords{inverse problems; medical imaging; parallel imaging; rotation invariance; variational reconstruction.}

\end{abstract}





\section{Introduction}
\label{sec:introduction}
Magnetic resonance imaging (MRI) is a powerful technique enabling non-invasive visualization of anatomy and physiology with high soft-tissue contrast, which is key for the diagnosis of many medical conditions.
The corresponding measurements are acquired sequentially, which is a time-consuming process with low throughput, high susceptibility to motion artifacts, and potential for patient discomfort.
Parallel imaging techniques like SENSE \cite{pruessmann1999sense} and GRAPPA \cite{griswold2002generalized} leverage spatial redundancy from multi-coil receiver arrays to accelerate acquisition by uniform Cartesian undersampling of the $k$-space measurements.
Even higher acceleration can be achieved through variable density sampling (VDS) combined with compressed sensing (CS) techniques that exploit sparsity of the imaged object under an appropriate transform \cite{lustig2007sparse,liu2008sparsesense,Chauffert_SIAM2014,boyer2017compressed}.
Efficient VDS implementations require non-Cartesian sampling of k-space, such as spiral trajectories \cite{meyer1992fast,law2009interleaved}, twisting radial lines \cite{jackson1992twisting} and rosette trajectories \cite{noll1997multishot}, which have been actually proposed long before the advent of CS to provide flexible k-space coverage.

Mathematically, the data acquisition is modeled with the Fourier transform  \cite{PottsSteidlTasche2001,fessler2003nonuniform}.
For each receiver coil $c \in \{1,\dots,C\}$ with sensitivity map $\mathbf S_c$ (which is encoded into a diagonal matrix), the associated k-space measurement $\mathbf y_c \in \mathbb{C}^{M}$ obeys
\begin{equation}
    \mathbf y_c = \mathcal{F}_\Omega \mathbf S_c \mathbf x + \mathbf n_c, 
\end{equation}
with $\mathbf x \in \mathbb{C}^{N}$ being the imaged object, $\mathcal{F}_\Omega$ being the Fourier transform at sample locations $\Omega = \{\mathbf k_m \}_{m=1}^{M}$ with
\begin{equation}
\big( \mathcal{F}_{\Omega} \mathbf x \big)_m 
= \sum_{n=1}^{N} \mathbf x_n e^{-i 2\pi \langle \mathbf k_m, \mathbf r_n \rangle}, 
\qquad m = 1, \ldots, M,
\label{nudft}
\end{equation}
and $\mathbf n = [\mathbf n_1, \dots, \mathbf n_C]^\top$ being additive white Gaussian noise that captures measurement errors.
In practice, the sensitivities $\mathbf S_c$ are unknown and need to be estimated from the data $\mathbf{y}_c$ through methods like ESPiRIT \cite{uecker2014espirit}. 
Stacking coils yields the linear forward model
\begin{equation}
\label{eq:model}
\mathbf y = \mathbf{A} \mathbf x + \mathbf n,
\end{equation}
with
\begin{equation}
\mathbf y = \begin{bmatrix}
\mathbf y_1\\
\vdots\\
\mathbf y_C
\end{bmatrix} \quad \text{and} \quad
\mathbf{A} = \begin{bmatrix}
\mathcal{F}_\Omega \mathbf S_1\\
\vdots\\
\mathcal{F}_\Omega \mathbf S_C
\end{bmatrix}.
\end{equation}

As we typically only have few samples $\mathbf k_1, \ldots, \mathbf k_M$ according to a non-uniform density, recovering $\mathbf x$ from \eqref{eq:model} is an ill-posed inverse problem with high sensitivity to the noise $\mathbf n$.
As a remedy, we pursue a variational reconstruction approach \cite{scherzer2009variational}, where we minimize an objective consisting of a data-fidelity term and a regularization term $\mathcal{R}$ (promoting desired properties of $\mathbf x$), namely
\begin{equation}
\label{optimization}
\widehat{\mathbf x} = \argmin_{\mathbf x \in \mathbb{C}^{N}}\frac{1}{2} \left \| \mathbf A \mathbf x - \mathbf y\right \|_2^2 + \lambda \mathcal{R}(\mathbf x), 
\end{equation}
where $\lambda > 0$ balances the terms.
A typical CS-based regularizer for \eqref{optimization} is $\mathcal{R}(\mathbf{x}) = \|\mathbf{\Psi x}\|_1$, which promotes sparsity under a transform $\mathbf{\Psi}$ such as the gradient (leading to the total variation \cite{Rudin1992ROF}) or Wavelets \cite{Donoho1995SoftThreshold,candes2008introduction}. 

On the other hand, deep-learning-based approaches have become the state-of-the-art for solving inverse problems, see for example the reviews \cite{AMOS2019, OngJalBar2020, habring2024neural}.
However, several concerns regarding their trustworthiness for applications remain \cite{gottschling2020troublesome}.
In contrast, the variational approach \eqref{optimization} is theoretically founded but cannot achieve the same reconstruction quality when paired with classical regularizers.
Thus, several works aim at learning better ones, see \cite{LearnedRegularizers} for an overview.
One example is the (learnable) fields-of-experts regularizer \cite{RotBla2009}, which was re-popularized as (weakly) convex ridge regularizer (WCRR) for two-dimensional inverse problems in \cite{goujon2023neural,goujon2024learning}.
Here, we extend and benchmark WCRR for a 3D non-cartesian parallel MRI setting.
Our contributions are threefold:

\begin{enumerate}
    \item \textbf{A principled, scalable prior for 3D MRI data}: We adapt WCRR to complex-valued 3D inputs, while preserving its interpretability and optimization guarantees.
    
    \item \textbf{Rotation invariance}: We intoduce a rotation-invariant formulation to decrease the model size and improve data-efficiency  as advocated in \cite{andrearczyk2019exploring,winkels2019pulmonary}.

    \item \textbf{A comprehensive MRI evaluation}: We benchmark WCRR reconstruction against parallel imaging methods, compressed sensing methods, Plug-and-Play, and unrolled network.
    This includes both retrospective simulated data and prospective real-world data.

\end{enumerate}

\section{Related Work}
If sufficiently many $\mathbf k_m$ are given, we can use the adjoint with density compensation (DCp) to get the approximate inverse  $\mathcal{F}_{\Omega}^{-1} \approx \mathcal{F}_{\Omega}^{H} \mathbf{D}$ with $\mathbf{D} = \text{diag}(\mathbf w)$, where $\mathbf{w}$ can be estimated iteratively as described in \cite{pipe1999sampling}.
Generalizations of this approach are discussed in \cite{kircheis2023fast}.
Due to its simplicity, DCp often serves as baseline and initialization for more sophisticated models.
Below, we solely comment on learned reconstruction approaches.

\paragraph{Learned regularizers}
Prior work on learning regularizers $\mathcal R$ for \eqref{optimization} is largely concentrated on 2D Cartesian MRI \cite{KobEff2020,ZacKnoPoc2023,ZouLiuWoh2023,DufCamEhr2023}.
A non-Carteisan setting was investigated in \cite{kofler2020neural}, and an extension to 3D Cartesian MRI was explored in \cite{fuin2020multi}.
For 3D non-Cartesian MRI, we are not aware of prior work that employs learned regularizers.

\paragraph{Unrolled models}
Unrolled networks such as Variational Networks \cite{hammernik2018learning}, Learned Primal–Dual \cite{adler2018learned} or MoDL \cite{aggarwal2018modl} are inspired by classical reconstruction methods such as \eqref{optimization}.
For non-Cartesian MRI, architectures such as Nonuniform Variational Networks \cite{schlemper2019nonuniform} and NC-PDNet \cite{ramzi2022nc} incorporate DCp and proper approximations of $\mathcal{F}_\Omega$ to ensure stability.
Unrolled methods require substantial training data and may be sensitive to distribution shifts (object, contrast, coils, trajectory, noise).

\paragraph{Plug-and-play}
Plug-and-play (PnP) approaches \cite{venkatakrishnan2013plug} replace the proximal operator appearing in iterative solvers for \eqref{optimization} by a denoiser.
As special case, regularization by Denoising (RED) \cite{romano2017little,reehorst2018regularization} deploys the gradient of a regularizer.
Both approaches perform well empirically; however, most convergence guarantees rely on assumptions such as nonexpansiveness or Jacobian symmetry \cite{liu2021recovery}, which are typically violated by state-of-the-art denoisers. More realistic conditions have been proposed for gradient-step denoisers \cite{hurault2021gradient, hurault2022proximal}.
So far, these have been implemented only in 2D, where they are already computationally expensive and highly memory intensive, making their extension to 3D challenging.

\section{Method}

Throughout, we interpret complex-valued image volumes as 2-channel (real and imaginary) real-valued ones.
First, we detail the architecture of the regularizer $\mathcal R$ that we deploy within the reconstruction model \eqref{optimization}.
Then, we describe its efficient training based on a denoising task.

\subsection{Rotation-invariant fields-of-experts}
We restrict ourselves to the fields-of-experts model and introduce the rotation-invariant formulation
\begin{equation}
\label{reg}
\mathcal{R}(\mathbf x) = \left| \mathcal{G} \right|^{-1} \sum_{\mathrm R \in \mathcal{G}}\sum_{j=1}^{J} \langle \mathbf{1}_{2N}, \psi_j(\mathbf W_j \mathrm R \mathbf x)\rangle,
\end{equation}
where $\mathcal{G}$ is a set of rotations, $\mathbf W=[\mathbf W_1, \ldots, \mathbf W_J]^\top$ a stack of convolution operators acting on the rotated versions of $\mathbf x \in \mathbb{R}^{2N}$, and the potentials $\psi=[\psi_1, \ldots,\psi_J]^\top$ with $\psi_j \colon \mathbb R \to \mathbb R$ are applied \emph{component-wise}.
The following proposition is the analog of \cite[Prop.\ 3.2]{goujon2024learning}.

\begin{proposition}
\label{prop_weak_conv}
If the $\psi_j$ are $1$-weakly convex and $\|\mathbf W\| = 1$, then $\mathcal{R}$ in \eqref{reg} is also $1$-weakly convex.
\end{proposition}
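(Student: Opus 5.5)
We use the standard characterization: a function $f$ is $\rho$-weakly convex if and only if $f + \frac{\rho}{2}\|\cdot\|_2^2$ is convex. So it suffices to show that
\begin{equation*}
\mathcal{R}(\mathbf x) + \tfrac12 \|\mathbf x\|_2^2
\end{equation*}
is convex on $\mathbb{R}^{2N}$. The idea follows \cite[Prop.\ 3.2]{goujon2024learning}. First we treat a single rotation $\mathrm R \in \mathcal G$, and then we average over $\mathcal G$, which preserves convexity.

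\textbf{Fix a rotation $\mathrm R$.} Set $\mathbf z = \mathrm R\mathbf x$ and $\mathcal R_0(\mathbf z) = \sum_{j=1}^J \langle \mathbf 1_{2N}, \psi_j(\mathbf W_j \mathbf z)\rangle$.
\begin{itemize}
\item Since each $\psi_j$ is $1$-weakly convex, the scalar function $t\mapsto \psi_j(t) + \frac12 t^2$ is convex.
\item Summing over components gives that $\mathbf u \mapsto \langle \mathbf 1, \psi_j(\mathbf u)\rangle + \frac12\|\mathbf u\|_2^2$ is convex.
\item Composing with the linear map $\mathbf W_j$ and summing over $j$ shows that
\begin{equation*}
\mathcal R_0(\mathbf z) + \tfrac12 \sum_j \|\mathbf W_j \mathbf z\|_2^2 = \mathcal R_0(\mathbf z) + \tfrac12\|\mathbf W\mathbf z\|_2^2
\end{equation*}
is convex.
\item Because $\|\mathbf W\| = 1$, the quadratic $q(\mathbf z) = \tfrac12\|\mathbf z\|_2^2 - \tfrac12\|\mathbf W\mathbf z\|_2^2 = \tfrac12\langle (\mathbf I - \mathbf W^\top\mathbf W)\mathbf z,\mathbf z\rangle$ is convex, since $\mathbf I - \mathbf W^\top\mathbf W \succeq 0$.
\item Adding $q$ yields that $\mathcal R_0 + \frac12\|\cdot\|_2^2$ is convex.
\end{itemize}

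\textbf{Pull back along $\mathrm R$.} The rotations act on the voxel grid; we treat each $\mathrm R$ as a linear orthogonal map on $\mathbb R^{2N}$, for example a permutation of voxels for $90^\circ$ rotations. Then $\|\mathrm R\mathbf x\|_2 = \|\mathbf x\|_2$, so
\begin{equation*}
\mathbf x \mapsto \mathcal R_0(\mathrm R\mathbf x) + \tfrac12\|\mathbf x\|_2^2 = \big(\mathcal R_0 + \tfrac12\|\cdot\|_2^2\big)(\mathrm R\mathbf x)
\end{equation*}
is a convex function composed with a linear map, hence convex.

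\textbf{Average over $\mathcal G$.} We have
\begin{equation*}
\mathcal R(\mathbf x) + \tfrac12\|\mathbf x\|_2^2 = |\mathcal G|^{-1}\sum_{\mathrm R\in\mathcal G}\Big(\mathcal R_0(\mathrm R\mathbf x) + \tfrac12\|\mathbf x\|_2^2\Big),
\end{equation*}
which is an average of convex functions and therefore convex. This proves the claim.

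\textbf{Main obstacle.} The delicate step is justifying that each $\mathrm R$ is norm-preserving. If the rotations involved interpolation, they would not be orthogonal and the argument would need the weaker bound $\|\mathbf W\mathrm R\|\le 1$. In that case I would replace orthogonality by $\|\mathrm R\|\le 1$: this gives $\tfrac12\|\mathbf x\|^2 - \tfrac12\|\mathbf W\mathrm R\mathbf x\|^2 \ge 0$ as a convex quadratic, and the rest of the argument goes through unchanged. The operator-norm condition $\|\mathbf W\| = 1$ applied to the stacked operator is exactly what makes the quadratic $q$ convex.
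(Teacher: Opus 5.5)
Your proof is correct, and it uses the same mechanism as the paper: write $\mathcal R+\tfrac12\|\cdot\|^2$ as a convex function of a linear map plus the nonnegative quadratic $\tfrac12(\|\cdot\|^2-\|\mathbf M\cdot\|^2)$. The difference is in how the sum over filters is handled.

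The paper proves that $\zeta_j\circ\mathbf M$ is $\|\mathbf M\|^2$-weakly convex for any linear $\mathbf M$. It applies this filter by filter with $\mathbf M=\mathbf W_j\mathrm R$ and $\|\mathbf W_j\|\le\|\mathbf W\|=1$. It then concludes because $\mathcal R$ is a ``non-negative average'' of the resulting $1$-weakly convex terms. You instead sum over $j$ first and bound the stacked quadratic, $\sum_j\|\mathbf W_j\mathbf z\|^2=\|\mathbf W\mathbf z\|^2\le\|\mathbf z\|^2$.

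Your ordering is the one that actually gives the constant $1$. In \eqref{reg} only the sum over $\mathcal G$ carries the factor $|\mathcal G|^{-1}$, while the sum over $j$ is a plain sum. So $J$ terms that are each $1$-weakly convex only certify $J$-weak convexity. Even the sharper per-filter moduli $\|\mathbf W_j\|^2$ add up to $\sum_j\|\mathbf W_j\|^2$, which can exceed $\|\mathbf W\|^2=1$ for a filter bank.

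The paper's lemma does give the claim if it is applied once to the stacked map $\mathbf M=\mathbf W\mathrm R$ with $\zeta(\mathbf u)=\sum_j\langle\mathbf 1,\psi_j(\mathbf u_j)\rangle$, which is essentially your argument. Its one advantage is that it covers any $\mathrm R$ with $\|\mathrm R\|\le 1$ at once, the case you treat separately in your closing remark. Your main argument instead relies on the exact isometry $\|\mathrm R\mathbf x\|=\|\mathbf x\|$ of the voxel-permuting $90^\circ$ rotations.
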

\begin{proof}
A function $g \colon \mathbb{R}^m\to\mathbb{R}$ is $\rho$-weakly convex iff 
$g+ \tfrac{\rho}{2}\|\cdot \|^2$ is convex. 
Hence, $\zeta_j$ with $z \mapsto \langle\mathbf{1},\psi_j(\mathbf z)\rangle$ is $1$-weakly convex because the $\psi_j$ are $1$-weakly convex.
Moreover, we have for any linear map $\mathbf{M}$ that
\begin{align}\label{eqLWeaklyConvProof}
&\zeta_j \circ \mathbf{M} +\tfrac{1}{2}\|\mathbf{M}\|^2\|\cdot\|^2 \notag\\
=&(\zeta_j+\tfrac{1}{2}\|\cdot \|^2) \circ \mathbf M +\tfrac{1}{2}(\|\mathbf{M}\|^2\|\cdot \|^2 - \|\mathbf M \cdot\|^2).
\end{align}
Due to $\|\mathbf M \mathbf x\|^2\le \|\mathbf{M}\|^2\|\mathbf x\|^2$, the second summand in \eqref{eqLWeaklyConvProof} is convex.
Thus, the composition $\zeta_j \circ \mathbf{M}$ is $\|\mathbf{M}\|^2$-weakly convex.
Now, we choose $\mathbf{M}=\mathbf W_j \mathrm R$ with $\|\mathrm R\|=1$ and $\|\mathbf W_j\|\le \| \mathbf W\|=1$.
Then each functional $\zeta_j(\mathbf W_j \mathrm R \cdot)$ is $1$-weakly convex. 
As non-negative average of $1$-weakly convex functions, $\mathcal{R}$ is $1$-weakly convex.
\end{proof}
If the $\psi_j$ are differentiable, we have that
\begin{equation}
    \nabla\mathcal{R}(\mathbf x)
=|\mathcal G|^{-1}\sum_{\mathrm R \in \mathcal G} \mathrm R^\top \mathbf W^\top \psi_j' (\mathbf W \mathrm R \mathbf x).
\end{equation}
For any $\mathbf x, \mathbf z \in \mathbb{R}^{2N}$, we then get
\begin{align}\label{eq:LipEst1}
    &\|\nabla\mathcal{R}(\mathbf x)-\nabla\mathcal{R}(\mathbf z)\|\notag\\ \leq&|\mathcal G|^{-1}\sum_{\mathrm R \in \mathcal{G}} \|\mathrm R^\top \| \|\mathbf W^\top \|
\| \psi_j'(\mathbf W \mathrm R \mathbf x)-\psi_j'(\mathbf W \mathrm R \mathbf z)\|.
\end{align}
Thus, if the $\psi_j'$ are Lipschitz continuous, we get from \eqref{eq:LipEst1} that the same holds for $\nabla R$.

\begin{figure}[htb]
\centerline{\includegraphics[width=\columnwidth]{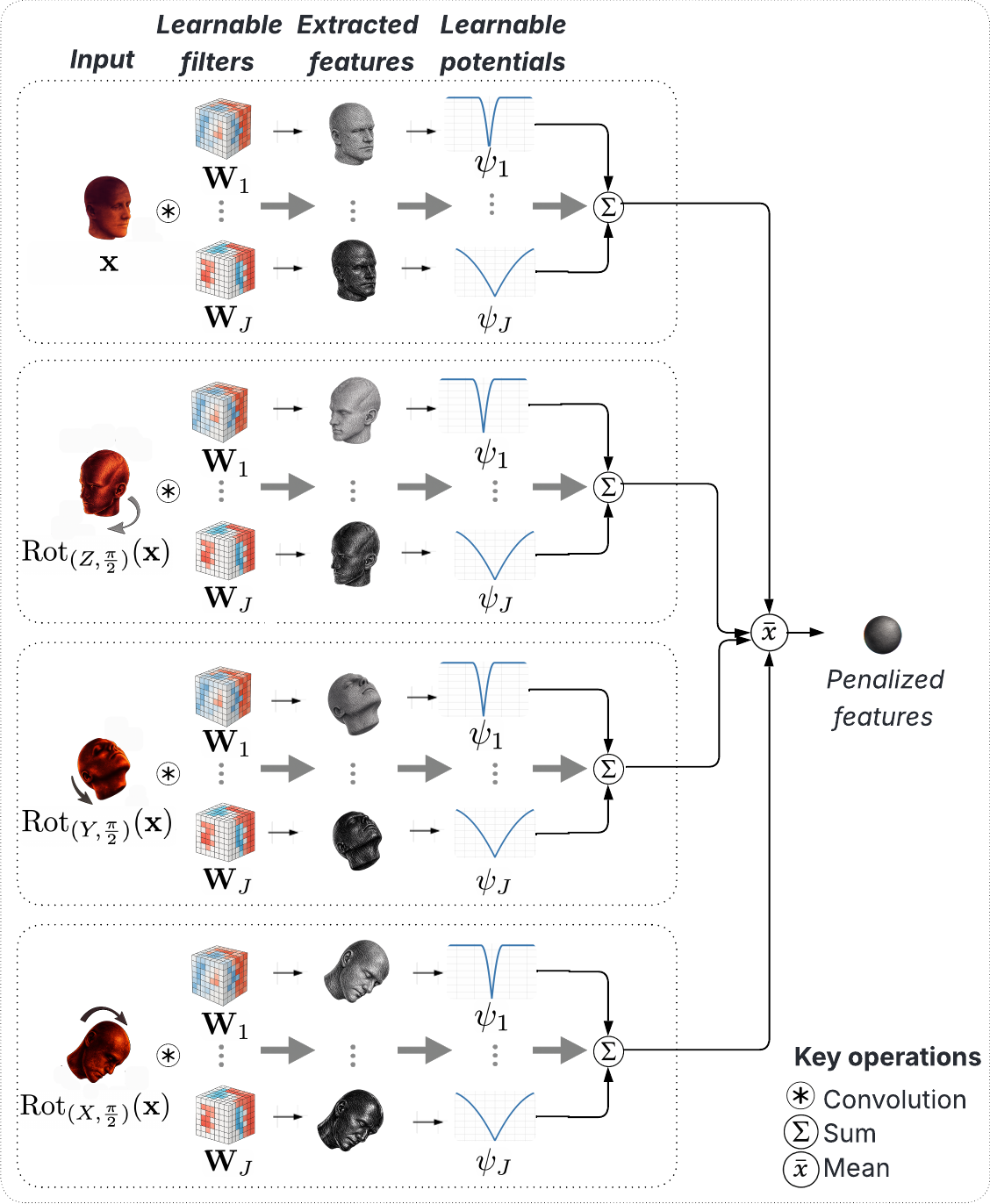}}
\caption{Illustration of WCRR: The (rotated) inputs  are processed by a filter bank $\{\mathbf W_1, \ldots, \mathbf W_J\}$.
The extracted features are penalized based on the potentials $\{\psi_1, \ldots, \psi_J\}$.
Usually, the filters extract high-frequency information.}\label{fig1}
\end{figure}

\subsection{Parametrization}
The architecture is illustrated in Figure~\ref{fig1}.
As trade-off between orientation diversity and computational efficiency, we adopt  
\begin{equation}
    \mathcal{G} = \{\mathrm{Id}, \mathrm{Rot}_{(X,\frac{\pi}{2})}, \mathrm{Rot}_{(Y,\frac{\pi}{2})}, \mathrm{Rot}_{(Z,\frac{\pi}{2})}\},
\end{equation}
i.e., the identity together with the three 90 degree rotations around the coordinate axes.
Further, we learn (i) the potentials $\{\psi_j\}_{j=1}^{J}$, 
and (ii) the convolution operators $\{\mathbf W_j\}_{j=1}^{J}$.
Below, we detail their parameterization.

\paragraph{Potentials}
As proposed in \cite{LearnedRegularizers}, we take the Huber function with shape parameter $\beta>0$ given by
\begin{align}
\label{eq:huber}
\phi_{\beta}^{\text{Huber}}(t)
&=
\begin{cases}
\frac{\beta}{2} t^2, & |t|\le \beta^{-1},\\[2pt]
|t|-\frac{1}{2\beta}, & |t|> \beta^{-1},
\end{cases}
\end{align}
and define the 1-weakly convex potential 
\begin{equation}
\label{shared_potential}
    \phi_\beta = \phi_{\beta}^{\text{Huber}} - \phi_{1}^{\text{Huber}}.
\end{equation}
Following \cite{goujon2024learning}, we adapt $\phi_\beta$ per channel as
\begin{equation}
\label{potentials}
\psi_j = \frac{1}{\alpha_j^2} \phi_\beta(\alpha_j \cdot),
\qquad \alpha_j > 0,
\end{equation}
with learnable parameters $\{\beta\} \cup \{\alpha_j\}_{j=1}^J$.
Using a shared profile makes $\mathcal{R}$ more interpretable and easier to analyze.

\paragraph{Convolutions}
To enforce $\Vert \mathbf{W} \Vert_2 =1$, we proceed as in \cite{goujon2024learning} and set
\begin{equation}
\mathbf{W} = \frac{\mathbf{U}}{\|\mathbf{U}\|},
\end{equation}
where $\mathbf{U}$ denotes a 3D convolution operator.
The spectral norm $\|\mathbf{U}\|$ is estimated using the discrete Fourier transform.
Although this technically requires periodic boundary, we found it to be sufficiently accurate.

The operator $\mathbf{U}$ is implemented as a cascade of three 3D convolutions with $3 \times 3 \times 3$ kernels, yielding an effective receptive field of $7 \times 7 \times 7$.
The number of output channels are $8$, $16$, and $32$, respectively.
All convolutions are bias-free and use unit stride and grouping. 
Moreover, the kernels of the first layer have zero mean.

\subsection{Training}\label{sec:Training}
\begin{algorithm}[tb]
\caption{nmAPG}\label{nmapg}
\begin{algorithmic}[1]
\Require Initialization $\mathbf x_0$, initial Lipschitz estimate $L_1 >0$, $\delta >0$, $\eta \in (0,1)$, $\rho \in (0,1)$, line search steps $K_L$, tolerance $\epsilon > 0$.
\State Set $\mathbf z_1=\mathbf x_1=\mathbf x_0$, $t_1=1$, $t_0=0$, $q_1=1$, $c_1=\mathcal{J}(\mathbf x_1)$
\While{$\|\mathbf x_k - \mathbf x_{k-1}\| / \|\mathbf x_{k-1}\| \geq \epsilon$ \textbf{or} $k=1$}
    \State $\bar{\mathbf x}_k \gets \mathbf x_k + \frac{t_{k-1}}{t_k}(\mathbf z_k - \mathbf x_k) + \frac{t_{k-1} - 1}{t_k}(\mathbf x_k - \mathbf x_{k-1})$
    \If{$k>1$}
        \State $L_k \gets \frac{\langle\nabla \mathcal{J}(\bar{\mathbf x}_k) - \nabla \mathcal{J}(\bar{\mathbf x}_{k-1}), \nabla \mathcal{J}(\bar{\mathbf x}_k) - \nabla \mathcal{J}(\bar{\mathbf x}_{k-1}) \rangle}{\langle\nabla \mathcal{J}(\bar{\mathbf x}_k) - \nabla \mathcal{J}(\bar{\mathbf x}_{k-1}), \bar{\mathbf x}_k - \bar{\mathbf x}_{k-1} \rangle}$
    \EndIf
    \State $c_k' \gets \max\{\mathcal{J}(\bar{\mathbf x}_k),c_k\}$
    \For{$l = 1, \ldots, K_L$}
        \State $\mathbf z_{k+1} \gets \bar{\mathbf x}_k - \frac{1}{L_k} \nabla \mathcal{J}(\bar{\mathbf x}_k)$
        \If{$\mathcal{J}(\mathbf z_{k+1}) \leq c_k'  - \delta \|\mathbf z_{k+1} - \bar{\mathbf x}_k\|^2$}
            \State \textbf{break}
        \EndIf
        \State $L_k \gets \frac{L_k}{\rho}$
    \EndFor
    \If{$\mathcal{J}(\mathbf z_{k+1}) \leq c_k  - \delta \|\mathbf z_{k+1} - \bar{\mathbf x}_k\|^2$}
        \State $\mathbf x_{k+1} \gets \mathbf z_{k+1}$
    \Else
        \State $L_k \gets \frac{\langle\nabla \mathcal{J}(\mathbf x_k) - \nabla \mathcal{J}(\bar{\mathbf x}_{k-1}), \nabla \mathcal{J}(\mathbf x_k) - \nabla \mathcal{J}(\bar{\mathbf x}_{k-1}) \rangle}{\langle\nabla \mathcal{J}(\mathbf x_k) - \nabla \mathcal{J}(\bar{\mathbf x}_{k-1}), \mathbf x_k - \bar{\mathbf x}_{k-1} \rangle}$
        \For{$l = 1, \ldots, K_L$}
            \State $\mathbf v_{k+1} \gets \mathbf x_k - \frac{1}{L_k} \nabla \mathcal{J}(\mathbf x_k)$
            \If{$\mathcal{J}(\mathbf v_{k+1}) \leq c_k  - \delta \|\mathbf v_{k+1} - \mathbf x_k\|^2$}
                \State \textbf{break}
            \EndIf
            \State $L_k \gets L_k / \rho$
        \EndFor
        \If{$\mathcal{J}(\mathbf z_{k+1}) \le \mathcal{J}(\mathbf v_{k+1})$}
            \State $\mathbf x_{k+1} \gets \mathbf z_{k+1}$
        \Else
            \State $\mathbf x_{k+1} \gets \mathbf v_{k+1}$
        \EndIf
    \EndIf
    \State $t_{k+1} \gets \frac{\sqrt{4t_k^2+1}+1}{2}$
    \State $q_{k+1} \gets \eta q_k +1$
    \State $c_{k+1} \gets \frac{\eta q_k c_k + \mathcal{J}(\mathbf x_{k+1})}{q_{k+1}}$
    \State $k \gets k + 1$
\EndWhile
\State \Return minimizer $\mathbf x_k$
\end{algorithmic}
\end{algorithm}

\begin{figure*}[t]
\centering
    \includegraphics[width=\linewidth]{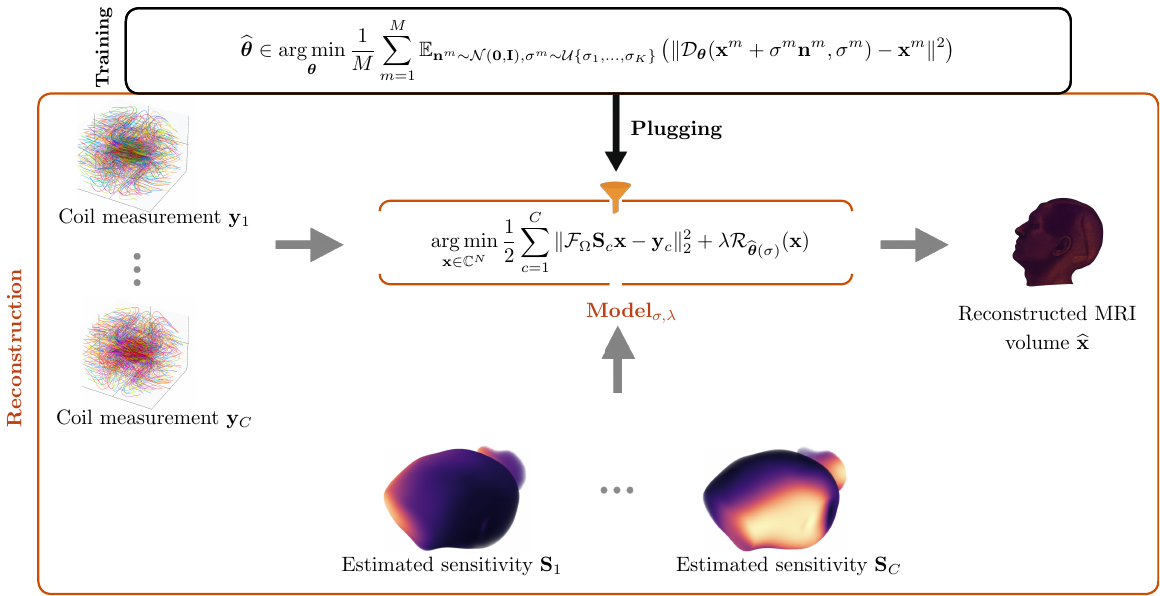}
    \caption{Overview of the training and reconstruction pipeline.
    The WCRR is trained on a Gaussian denoising task, and then used for MRI reconstruction.
    This involves only tuning of the scalar hyperparameters $\sigma$ and $\lambda$ of the reconstruction model on a (small) validation set.
    The coil sensitivity maps are estimated using the ESPiRIT algorithm \cite{uecker2014espirit} on the central 24x24 k-space data (simulating ACS acquisition).}
    \label{train_recon_pipeline}
\end{figure*}
As training task for the regularizer $\mathcal{R}_\theta$ in \eqref{reg} with aggregated parameters 
$\theta$, we consider denoising.
To this end, let $\{\mathbf{x}^m\}_{m=1}^{M}$ denote a collection of clean training volumes.
Each volume $\mathbf{x}^m$
is corrupted as $\mathbf{y}^m = \mathbf{x}^m + \sigma^m \mathbf{n}^m$ with Gaussian noise $\mathbf{n}^m \sim \mathcal{N}(\mathbf{0},\mathbf{I})$ and noise level $\sigma^m \in [\sigma_{\min},\sigma_{\max}]$.
Inserting the parametric regularizer $\mathcal{R}_\theta$ from \eqref{reg} into \eqref{optimization} (without the operator $\mathbf A$) induces the denoiser
\begin{equation}
\mathcal D_{\theta}(\mathbf y)
=\argmin_{\mathbf x\in\mathbb{R}^{2N}}
\frac{1}{2}\|\mathbf x- \mathbf y\|_2^2 + \mathcal{R}_\theta(\mathbf x).
\label{prox_denoiser}
\end{equation}
Now, to deal with all $\sigma \in [\sigma_{\min}, \sigma_{\max}]$ simultaneously, we follow \cite{goujon2024learning} and condition the $\alpha_j$ in \eqref{potentials} on $\sigma$ via the modified parametrization
\begin{equation}
\alpha_j(\sigma) 
= \frac{\exp(s_{c_j}(\sigma))}{\sigma + 10^{-5}},
\qquad \sigma \in [\sigma_{\min}, \sigma_{\max}],
\label{alpha_cond}
\end{equation}
where $s_{c_j}$ is a learnable linear spline with $K$ equidistant knots $\sigma_1=\sigma_{\min}, \ldots, \sigma_K=\sigma_{\max}$ and associated (learnable) values $c_j$.
This leads to the conditioned regularizer $\mathcal{R}_{\theta(\sigma)}$ with associated denoiser
\begin{equation}
\label{noise_cond_prox}
\mathcal{D}_{\theta}(\mathbf y, \sigma) =
\argmin_{\mathbf x\in\mathbb{R}^{2N}}
\frac{1}{2}\|\mathbf x - \mathbf y\|_2^2 
+ \mathcal{R}_{\theta(\sigma)}(\mathbf x).
\end{equation}
To minimize the objective in \eqref{noise_cond_prox}, we choose the \emph{non-monotone accelerated proximal gradient} (nmAPG) algorithm, which ensures global convergence to a minimum \cite[Supp.\ Thm.\ 4]{li2015accelerated}.
The scheme is summarized for a generic objective $\mathcal J$ in Algorithm~\ref{nmapg}.
At each step $k$, a candidate $\mathbf z_{k+1}$ for the next iterate $\mathbf x_{k+1}$ is obtained through a gradient step taken from the extrapolation $\bar{\mathbf x}_k$.
If $\mathbf z_{k+1}$ does not satisfy the non-monotone acceptance criterion, we compute a fallback candidate $\mathbf v_{k+1}$ without the extrapolation and choose $\mathbf x_{k+1}$ as the one with the lower energy $\mathcal J$.
In both cases, the step size is initialized using a Barzilai–Borwein rule \cite{barzilai1988two} and subsequently refined by a backtracking line search.
In our experiments, we set $\delta=0.1$, $\eta=0.8$ and $\rho=0.9$, and terminate the iterations when the relative change between consecutive iterates falls below 
$\epsilon = \num{1e-4}$.

For the multi-noise denoiser $\mathcal{D}_{\theta}$, we seek the parameters $\theta$ that minimize the average reconstruction error over the dataset and all noise levels, namely
\begin{equation}
\label{eq:training}
\hat{\theta} \in 
\argmin_{\theta}
\frac{1}{M} \sum_{m=1}^{M}
\mathbb{E}_{(\mathbf{n}^m,\sigma^m)}
\|\mathcal D_{\theta}(\mathbf{y}^m, \sigma^m)
- \mathbf{x}^m\|_2^2.
\end{equation}
We minimize the training loss \eqref{eq:training} using the stochastic AdaBelief optimizer \cite{zhuang2020adabelief}.
This involves the Jacobian $\mathbf J_{\theta} \mathcal D_{\theta}(\mathbf{y}, \sigma)$.
To compute it, we adopt the implicit differentiation approach popularized for deep equilibrium models \cite{bai2019deep}.
Specifically, we have that $\hat{\mathbf x}( \theta) = \mathcal D_{\theta}(\mathbf{y}, \sigma)$ satisfies
the fixed-point condition
\begin{equation}
\label{eq:fixed_point}
\hat{\mathbf{x}}(\theta) - \mathbf{y}
+ \nabla_{\mathbf{x}} \mathcal{R}_{\theta}(\hat{\mathbf{x}}(\theta)) = 0.
\end{equation}
Applying the implicit-function theorem  to~\eqref{eq:fixed_point} yields
\begin{equation}
\label{eq:implicit_jacobian}
(\mathbf{I} + \mathbf{H}_{\mathcal{R}}(\theta,\hat{\mathbf{x}}(\theta)))
 \mathbf J_{\theta}\hat{\mathbf{x}}(\theta)
= \mathbf J_{\theta}(\nabla_{\mathbf{x}}\mathcal{R})(\theta,\hat{\mathbf{x}}(\theta)),
\end{equation}
where $\mathbf{H}_{\mathcal{R}}$ denotes the Hessian of $\mathcal R_{\theta}$ with respect to $\mathbf{x}$.
Matrix–vector products involving $\mathbf J_{\theta}\mathcal D_{\theta}$ are computed by solving the linear system \eqref{eq:implicit_jacobian} using the \emph{minimum residual method}.

Training is performed with a batch size of $12$ for $500$ epochs using an initial learning rate of $\num{1e-2}$ and an exponential learning-rate schedule with a decay factor of $0.05^{1/500}$ per epoch.
We use $M=47$ training volumes (see Section~\ref{sec:Pipeline}) and randomly extract $64 \times 64 \times 64$ patches for each batch to improve computational efficiency.
Regarding the noise range, we set $\sigma_{\min} = 0.01$, $\sigma_{\max} = 0.1$ and  $K=12$.
Within each batch, every element is corrupted with a different noise level corresponding to one of the 12 spline knots.
This maximizes the coverage of noise information, which we found to accelerate the training.
This strategy can be extended to subsampling without replacement if the spline contains more knots than batch elements.
Moreover, we regularize the training loss in \eqref{eq:training} with \begin{equation}
\label{hessian_reg}
\mu \left \Vert \mathbf{H}_{\mathcal{R}}(\theta,\hat{\mathbf{x}}(\theta)) \right \Vert_2,
\end{equation}
namely the spectral norm of the Hessian  of $\mathcal{R}$, with regularization strength $\mu = \num{1e-6}$ every five steps.
The spectral norm is approximated using up to 50 power iterations.
Empirically, this smoothness-promoting regularization resulted in parameter configurations for which nmAPG converged faster during evaluation.

\subsection{MRI reconstruction}\label{sec:Pipeline}

Given the regularizer \eqref{reg}, which is trained for denoising as described in Section~\ref{sec:Training}, we now want to perform MRI reconstruction using variational reconstruction.
As before, we minimize the reconstruction objective \eqref{optimization} with Algorithm~\ref{nmapg}.
Since the model was trained for denoising, we have to tune both the regularization parameter $\lambda$ and the noise level $\sigma$ of $\mathcal R$, for example with a grid search on a small validation set as described in Section~\ref{sec:Experiments}.
A summary of the complete training and reconstruction pipeline is given in Figure~\ref{train_recon_pipeline}.
Details on the data and the deployed sampling trajectory are provided below.

\paragraph{Training and validation data}\label{sec:data}
Our prospective simulations are conducted using the Cartesian raw k-space data from the \textit{Calgary-Campinas} dataset~\cite{beauferris2022multi}, 
which provides 167 volumetric 3D T$_1$-weighted gradient-echo brain scans acquired from healthy volunteers on a 3T scanner.  
The dataset includes 117 volumes collected with a 12-channel coil and 50 volumes acquired using a 32-channel coil.
The train/validation/test split of the original dataset was 47/20/50 for 12-coil measurements and all the 50 32-coil measurements were used for test.
All volumes admit partial Fourier sampling (up to 85\%) along the slice-encoding direction ($k_z$).
For numerical stability, we divide the measurements by $10^6$.

From these, we generated complex reference volumes by zero-filling the non-sampled regions, followed by an inverse fast Fourier transform.
For the resulting per coil volumes, we applied virtual coil combination to obtain reference volumes.
To ensure consistent dimensions, these are center-cropped to size $N_x \times N_y \times N_z = 256 \times 218 \times 170$.
In the absence of a publicly available, high-quality 3D MRI database, we consider these volumes as a reasonable collection for training and benchmarking purposes without further post-processing.

\begin{figure}[ht]
    \centering
    
    \includegraphics[width=0.99\linewidth]{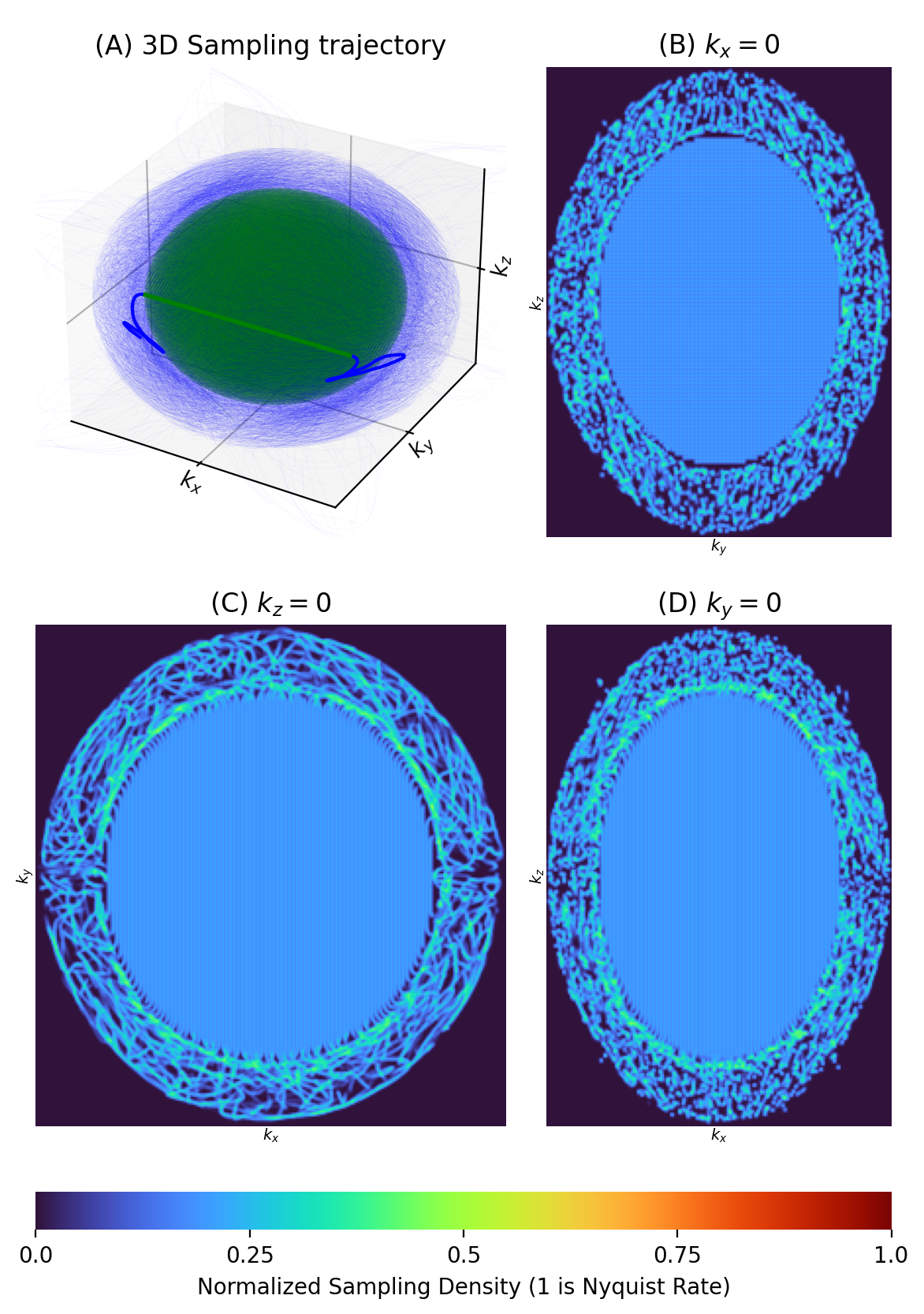}
    \caption{(A) 3D GoLF-SPARKLING trajectory with GRAPPA acceleration.
    The green portion highlights the central Cartesian readouts, and the blue one depicts the non-Cartesian SPARKLING parts.
    Slices with $\mathbf{k_x} = 0$, $\mathbf{k_z} = 0$, and $\mathbf{k_y} = 0$ are given in 
    (B), (C) and (D), respectively.}
    \label{trajectory}
\end{figure}

\paragraph{Non-cartesian sampling trajectory}
In our experiments, we adopt the 3D GoLF-SPARKLING trajectory with GRAPPA acceleration \cite{radhakrishna2025bringing}, which relies on two complementary strategies: Cartesian undersampling of the k-space center and non-uniform sampling at higher frequencies.
To this end, the trajectory readouts are designed to intersect the k-space center along straight Cartesian lines. 
Then, by modifying trajectory-specific affine constraints \cite{3dsparkling,giliyar2023improving}, we ensure a structured, GRAPPA-like undersampling pattern for the center.
In practice, we maintain a 2x2 pattern, concentrating approximately 60\% of the measurements near the origin.
In contrast, the remaining portion of the k-space follows a non-Cartesian sampling distribution to exploit the benefits of compressed sensing.

The trajectory consists of $4489$ shots (readouts) with $416$ samples each.
A visualization is provided in Figure~\ref{trajectory}.
The chosen design ensures robust low-frequency coverage and data consistency while preserving the efficiency of non-Cartesian SPARKLING sampling at high frequencies.
It achieves an acceleration factor of $\rm{AF} \approx 8.3$, resulting in a scan time of nearly 1 minute for a 1 mm isotropic whole brain MRI scan.

\section{Benchmark setup}\label{sec:Experiments}

We compare WCRR to the reconstruction methods listed below.
For their implementation, we rely on the \emph{DeepInverse} library \cite{tachella2025deepinverse}.
Moreover, we use the 3D non-uniform fast Fourier transform (NUFFT) implementation from \emph{MRI-NUFFT} \cite{comby2024mri}.
Our code is available on Github \footnote{\url{https://github.com/Shamachrist7/wcrr-noncartesian-3d-mri}}.

\paragraph{GRAPPA + DCp}

GRAPPA performs a kernel-based interpolation to fill the missing values in the gridded region of the k-space using neighboring values.
We deploy a kernel with size $N_{\mathbf{k}_x} \times N_{\mathbf{k}_y} \times N_{\mathbf{k}_z} = 5 \times 4 \times 4$.
The GRAPPA kernel weights were fit based on the central $24\times24$ k-space lines, which correspond to the autocalibration signal (ACS) along with Tikhonov regularization with regularization parameter $10^{-4}$.
We then apply the DCp adjoint as approximate inverse.

\paragraph{$\ell_1$-wavelet}
This variational method is based on the regularizer $\mathcal{R}(\mathbf x) = \| [\mathbf{\Psi} \mathbf x]_{\text{details}} \|_1$, where $\mathbf{\Psi}$ denotes the orthonormal 3D Daubechies-4 wavelet transform \cite{daubechies1988orthonormal} with four decomposition levels (ignoring the  approximation coefficients).
The associated reconstruction problem \eqref{optimization} is solved using the \emph{Fast Iterative Shrinkage-Thresholding Algorithm} (FISTA) \cite{Beck2009FISTA}.
The required proximal operator of $\mathcal R$ has a closed-form (wavelet thresholding).
In practice, we need to tune the regularization strength $\lambda$.

\paragraph{TV} For reconstruction with isotropic TV regularization, we deploy a primal-dual scheme \cite{Condat2013} with updates
\begin{align}
\mathbf p_{k+1} &= \operatorname{proj}_{\lambda \mathcal B_{2,\infty}}\bigl(\mathbf p_k + \eta \nabla \mathbf x_k \bigr)\\
\mathbf x_{k+1} &= \mathbf x_k - \tau\bigl(\mathbf A^T(\mathbf A \mathbf x_k - \mathbf y) + \nabla^{\top} (2 \mathbf p_{k+1} - \mathbf p_k)\bigr),\notag
\end{align}
where $\mathcal B_{2,\infty}$ denotes the unit ball for the grouped norm $\Vert \cdot \Vert_{2,\infty}$.
To ensure convergence, we set $\tau = 1/\|A\|^2$, $\eta = 1/(24\tau )$ and $\mathbf p_0 = \mathbf 0 \in \mathbb{R}^{2N \times 3}$.
Again, we need to tune the regularization strength $\lambda$.

\paragraph{NC-PDNet}
The NC-PDNet \cite{ramzi2022nc} is a 3D non-Cartesian extension of XPDNet, which ranked second in the fastMRI challenge \cite{Muckley2021}.
It consists of 6 unrolled iterations of the Chambolle-Pock algorithm \cite{chambolle2011first} with uncoupled parameters.
As refinement backbone, we adopt a residual 3D U-Net \cite{Ronneberger2015}, with three scales, SiLU activations, and 16–32–64 channels per scale, respectively.
Each LPD step takes the previous two updates as input (extrapolation), and we trained the network using a combined L1–SSIM loss on magnitude differences.
Note that our training set is comparatively small for an unrolled model.

\paragraph{Plug-and-play: DPIR} 
DRUNet is a state-of-the-art learned denoiser \cite{ZhaLiZuo2022}.
A model trained for our data is available on Hugging Face\footnote{\url{https://huggingface.co/deepinv/drunet_3d_denoise_complex/tree/main}}.
Following \cite{ZhaLiZuo2022}, we unroll the \emph{Half Quadratic Splitting} algorithm for $K=8$ iterations, leading to

\begin{equation}
\begin{array}{l}
\mathbf u_{k} = \operatorname{prox}_{\gamma_k \Vert \mathbf A \cdot -\mathbf y \Vert^2 /2} (\mathbf x_k)\\
\mathbf x_{k+1} = \mathcal{D}(\mathbf u_k, \sigma_k),
\end{array}
\end{equation}
with noise levels $\sigma_k 
= \sigma_{\mathrm{init}}(\sigma/ \sigma_{\mathrm{init}})^{\frac{k}{K-1}}$ and stepsizes
$\gamma_k 
= \lambda (\sigma_k / \sigma)^2$, where $\sigma_{\mathrm{init}} = 0.01$.
The tunable parameters are $\lambda$ and $\sigma$.

\paragraph{Implementation Details and Tuning} For all iterative methods, the initialization $\mathbf x_0$ was taken as GRAPPA + DCp.
We terminated all energy-based methods when the relative change between consecutive iterates drops below $\num{5e-3}$, except for TV which required a smaller tolerance of $\num{5e-4}$.
After these thresholds, the reconstruction metrics remained constant.
All hyperparameters were tuned with a grid search for the best reconstruction peak-signal-to-noise-ratio (PSNR) on a validation set consisting of 5 12-coil volumes from the validation split.
The adopted parameters for the retrospective simulation in Section~\ref{sec:SimulatedExp} are summarized in Table~\ref{hyperparams}.

\begin{table}[bt]
\centering
\begin{tabular}{lccc}
\hline
Iterative & Regularization & Denoising \\
method & parameter $\lambda$ & power $\sigma$ \\
\hline
$\ell_1$-wavelet & 0.003 & -- \\
TV & 0.0001 & -- \\
WCRR & 0.01 & 0.03 \\
DPIR & 5.5 & 0.002 \\
NC-PDNet & -- & -- \\
\hline
\end{tabular}
\caption{Retrospective simulation (Section \ref{sec:SimulatedExp}): Adopted hyperparameters for each method.}
\label{hyperparams}
\end{table}

\section{Results}

\subsection{Retrospective simulation results}\label{sec:SimulatedExp}
\begin{table*}[ht]
\centering
\begin{tabular}{lccccccr}
\toprule
\multirow{3}{*}{\textbf{Method}}
& \multicolumn{3}{c}{\textbf{12-coil}} & \multicolumn{3}{c}{\textbf{32-coil}} & \multirow{3}{*}{\textbf{\# Params}} \\
\cmidrule(lr){2-4} \cmidrule(lr){5-7}
& Masked & Masked & \multirow{2}{*}{Runtime} 
& Masked & Masked & \multirow{2}{*}{Runtime} 
&  \\
& PSNR & SSIM &  
& PSNR & SSIM &  
&  \\
\midrule

GRAPPA + DCp
& 28.20 & 0.8035 & \textbf{7\,s} 
& 35.62 & 0.9349 & \textbf{23\,s} 
& -- \\

$\ell_1$-wavelet 
& 29.80 & 0.8853 & 50\,s 
& 36.38 & 0.9561 & 1\,min\,51\,s 
& -- \\

TV 
& 31.05 & 0.9158 & 1\,min\,51\,s
& \underline{37.29} & \underline{0.9665} & 3\,min\,53\,s 
& -- \\

WCRR 
& \textbf{31.58} & \underline{0.9320} & 1\,min\,10\,s 
& \textbf{37.51} & \textbf{0.9708} & 1\,min\,57\,s
& \textbf{8{,}377} \\

DPIR 
& \underline{31.40} & \textbf{0.9321} & 6\,min\,38\,s 
& 35.35 & 0.9593 & 14\,min\,15\,s 
& 96{,}543{,}168 \\

NC-PDNet 
& 30.10 & 0.9201 & \underline{12\,s} 
& 32.52 & 0.9540 & \underline{31\,s} 
& \underline{6{,}670{,}584} \\

\bottomrule
\end{tabular}
\caption{Masked PSNR (dB), masked SSIM, and runtime of the different reconstruction methods for the 12-coil and 32-coil test data (unseen generalization task).
The number of learnable parameters is given for learning-based approaches.
In each column, the best value is highlighted in bold, and the second-best is underlined.}
\label{tab:quantitative_results}
\end{table*}

\begin{figure*}[ht]
    \centering
    \includegraphics[width=\linewidth]{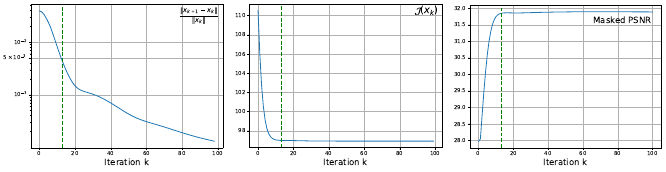}
    
    \caption{WCRR convergence curves for reconstructing the 12-coil volume \emph{e14091s3\_P67584.7.h5}. \emph{(Left)} Relative error between consecutive iterates (tolerance). \emph{(Middle)} Energy functional. \emph{(Right)} Masked PSNR.
    The dashed vertical line highlights where Algorithm \ref{nmapg} terminates with tolerance \num{5e-3}.}
    \label{fig:convergence_curves}
\end{figure*}

\begin{figure}
    \centering
    \includegraphics[trim=2 4 1 4, clip, width=\linewidth]{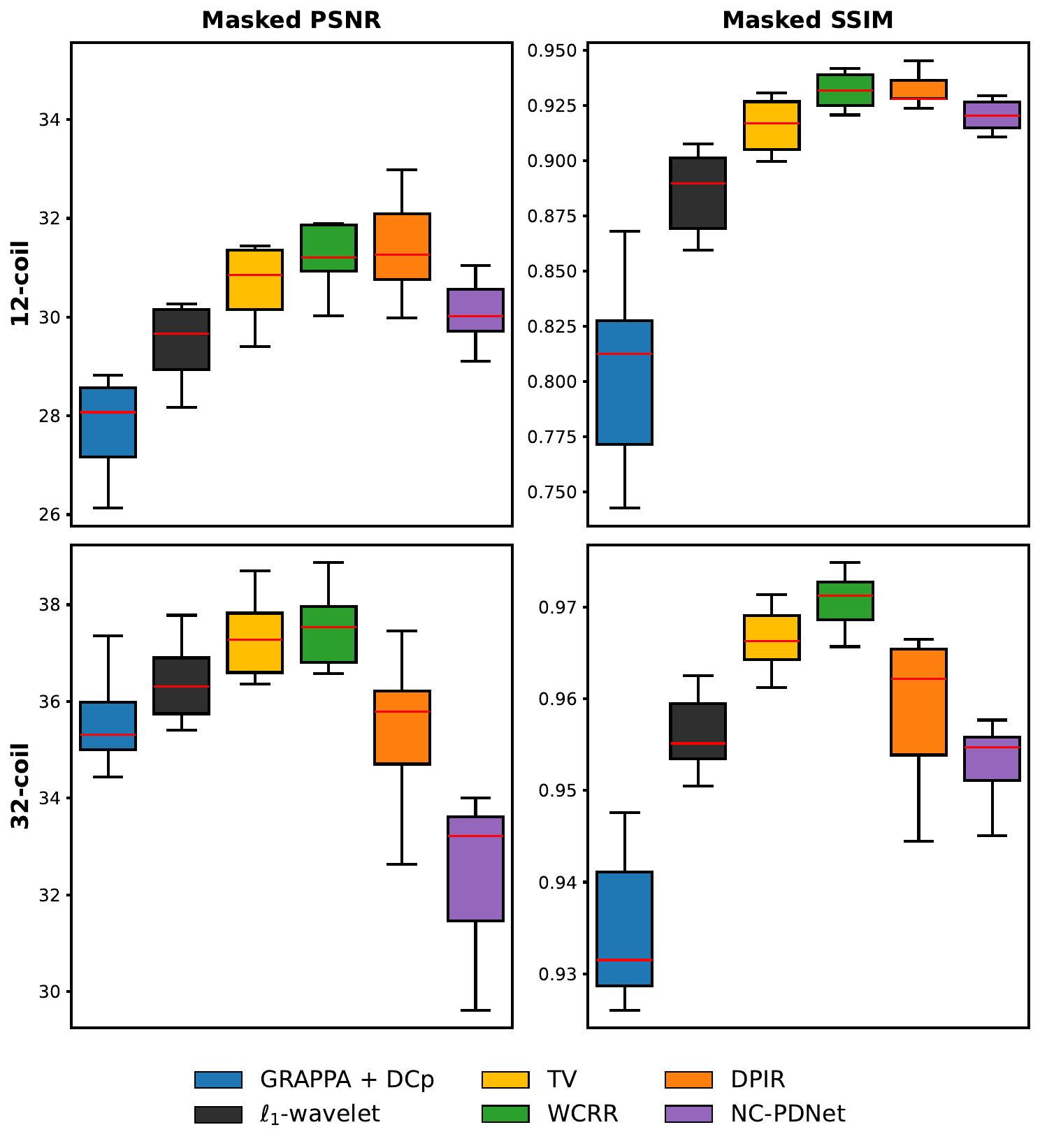}
    \caption{\, Box plots of the quantitative results (masked PSNR/masked SSIM) across volumes in the test sets.}
    \label{fig:quantitative_results}
\end{figure}

All reconstructions were performed on an \emph{NVIDIA GeForce RTX 4070 Ti SUPER} with 16 GB memory.
The overall peak GPU memory utilization recorded during reconstructions was found to be around $8$ GB. 

We generate non-Cartesian undersampled k-space data directly from our preprocessed per-coil MR images, see Section \ref{sec:data}, and add white Gaussian noise with a standard deviation of $\sigma = \num{2e-3}$. 
From this data, we first compute the associated sensitivity maps $\mathbf S_c$ using ESPiRIT \cite{uecker2014espirit}.
The resulting data consistency operator $\mathcal{F}_\Omega \mathbf{S_c}$ is subsequently integrated into variational reconstruction and physics-informed deep learning models.
For computational efficiency, we restrict ourselves to the first 10 measurements in both the 12- and 32-coil test datasets, selected in alphabetical order by name.
We recall that the models are only fitted for the 12-coil setup and the 32-coil setup remains as a generalization task.

\paragraph{Quantitative results}
For quantitative evaluation, following the fastMRI protocol \cite{Muckley2021}, PSNR and SSIM \cite{wang2004image} (structural similarity index measure) were evaluated only within a foreground mask defined as the set of voxels whose magnitude exceeds 
$5\%$ of the maximum ground-truth magnitude.
This restricts the evaluation to anatomical regions and prevents bias caused by the large background area.
The resulting metrics reported in Table~\ref{tab:quantitative_results} are averaged across all test volumes.
Moreover, we provide the average runtime per reconstruction and the parameter count for each method. Note that the MR images were all z-score normalized just before computing the metrics.

As expected, the parallel-imaging baseline (GRAPPA + DCp) provides the fastest reconstruction by a margin, but remains behind the other approaches in terms of reconstruction quality.
The $\ell_1$-wavelet and TV methods improve reconstruction quality substantially at the cost of increased computation time.
Despite being the only method trained specifically for the 12-coil configuration, the unrolled network NC-PDNet does not translate this training advantage into superior reconstruction quality.
This is most likely due to the size of our training set, which is comparatively small for an unrolled network.
Instead, the highest metrics for the 12-coil configuration are reached by DPIR and the proposed WCRR.
However, DPIR struggles with the previously unseen 32-coil setup, suggesting limited generalization capability (potentially related to the patch-based application of the DRUNet).
In contrast, WCRR performs very well in this unseen setting, outperforming all the other methods, while requiring four orders of magnitude fewer parameters than the deep learning alternatives.
The distributions of the quantitative metrics across volumes reported in Figure~\ref{fig:quantitative_results} further reinforce the discussed findings.
Finally, Figure~\ref{fig:convergence_curves} showcases a very important property of our WCRR: It is a convergent variational method.
In particular, although we terminate the iterations with a relatively large tolerance of $\num{5e-3}$, Algorithm \ref{nmapg} has essentially converged.
Thus, its runtime remains comparable to classical iterative reconstruction baselines and much faster than the PnP approach DPIR.

\begin{figure*}[t]
\centering
    \includegraphics[width=\linewidth]{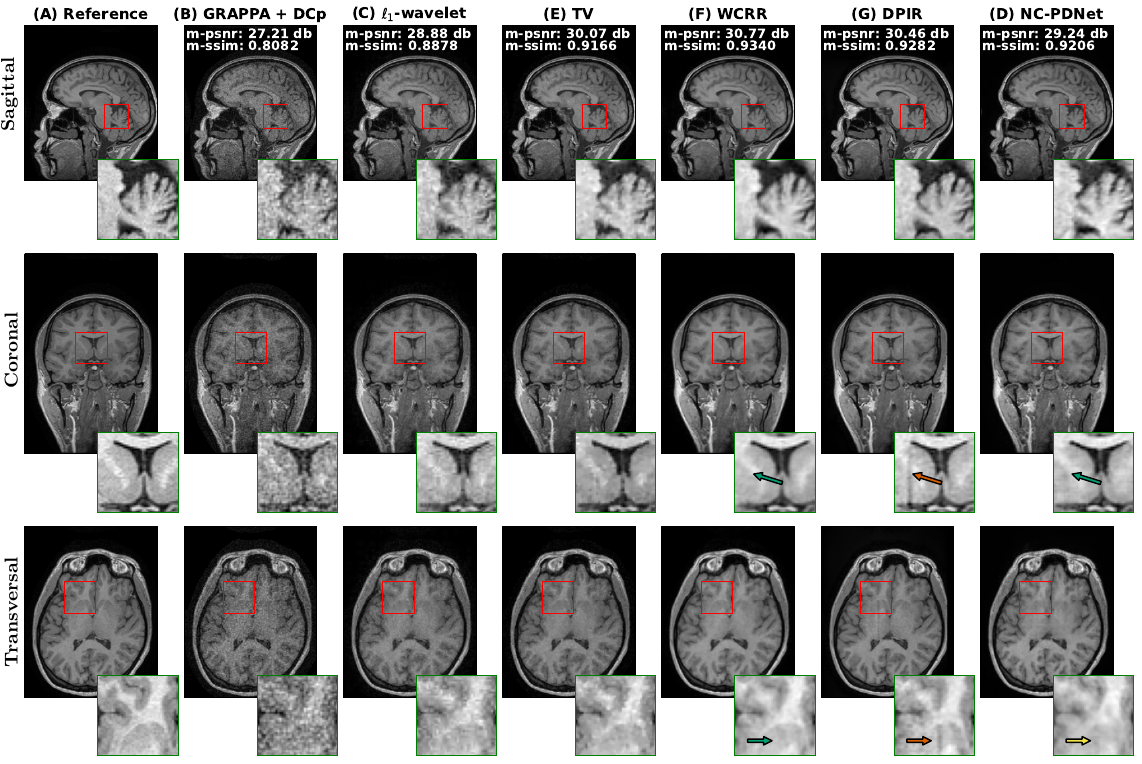}

\caption{Cross-section (5th slice before the mid-slice) of the 12-coil volume \emph{e14553s5\_P44544.7.h5} along the \emph{sagittal}, \emph{coronal}, and \emph{transversal} axes.
The highlighted region is shown magnified in the corresponding inset.
The shorthands m-psnr and m-ssim refer to the masked metrics.
The green, yellow, and red arrows indicate good, okay, and bad behavior, respectively.}
\label{fig:example_reconstuctions}
\end{figure*}

\paragraph{Qualitative results}

\begin{figure*}[!ht]
\centering
    \includegraphics[width=\linewidth]{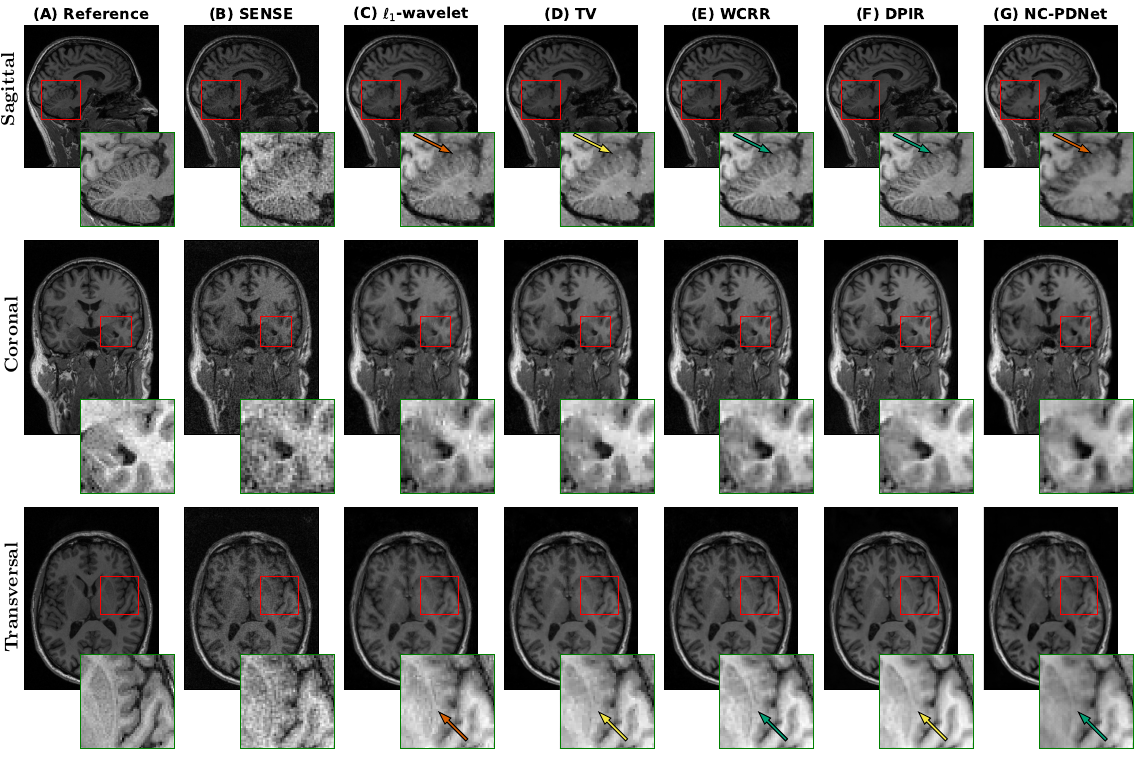}

\caption{Mid plane reconstructions for GS-SPARKLING acquisitions with 2x2 GRAPPA (1 min) with (B) SENSE, (C) $\ell_1$-wavelets, (D) TV, (E) WCRR, (F) DPIR and (G) NC-PDNet.
A reference volume (A) from a complete scan is also shown.
We showcase \emph{sagittal}, \emph{coronal} and \emph{transversal} views along with zoomed insets.
The green, yellow, and red arrows indicate good, okay, and bad behavior, respectively.}
\label{fig:prospective_sparks}
\end{figure*}

Figure~\ref{fig:example_reconstuctions} reports qualitative comparisons on a 12-coil volume through sagittal, coronal, and transversal slices.
Here, GRAPPA + DCp exhibits noticeable noise and residual aliasing, which partially obscures the thin white-matter boundaries.
The $\ell_1$-wavelet and TV regularization reduce these artifacts but tend to oversmooth the fine structures, leading to slightly blurred edges.

For the coronal view, the zoom focuses on the region around the lateral ventricles and adjacent white-matter structures, whose sharp borders are important anatomical landmarks.
Here, DPIR produces visually sharp boundaries, although some subtle textural inconsistencies appear in the periventricular white matter.
Similar hallucination of structure was found for another 12-coil test volume and for even more of the 32-coil volumes. 
WCRR achieves a balanced reconstruction, preserving the ventricular contours and surrounding tissue contrast while maintaining a natural-looking texture.
In the transversal (axial) slice, the highlighted region corresponds to complex gray- and white-matter interfaces.
NC-PDNet produces relatively blurry structures in the zoomed region. Blurred traces of the exact same artefacts exhibited by the DPIR reconstruction can also be seen on the NC-PDNet reconstruction from a very close look.
In contrast, WCRR preserves the anatomical interfaces of the basal ganglia region with clearer structural definition and fewer artifacts.

\subsection{Out of distribution tests}
We now showcase out-of-distribution results. 
For these, acquisitions were performed during the SENIOR cohort \cite{Haeger2020} scans at Neurospin with a 3T Siemens Healthineers Prisma$^{fit}$ scanner and a 20-channel head coil.
We utilized a clinically standard MPRAGE sequence with 1 mm isotropic resolution and whole-brain coverage.
The parameters included TE/TR/TI of 3 ms/7.6 ms/800 ms, an MPRAGE-TR of 2.3 s, turbofactor of 176 and a flip angle of $9^\circ$.
For ground truth reference, a fully sampled k-space acquisition was performed with a total scan time of nearly 9 minutes.
For all the accelerated scans, external ACS acquisitions were obtained to estimate the sensitivity maps $\mathbf S_c$ through ESPiRIT \cite{uecker2014espirit}.
All comparisons are purely qualitative due to potential inter-scan motion.

\paragraph{GS-SPARKLING with 2x2 GRAPPA}
We first tested all methods for acquisition based on the trajectory described in Section~\ref{sec:data} with a total scan time of 1 minute.
The results are given in Figure~\ref{fig:prospective_sparks} along with zoomed in panes in sagittal, coronal and transversal views for easier qualitative comparison. 
For this data, the wavelet~(C) and TV~(D) reconstructions are a clear improvement over the SENSE reconstruction~(B), but showcase blocky discontinuities and stair casing artifacts, respectively.
As for our simulations, the best reconstruction results are achieved by WCRR (E) and DPIR (F) with significant reduction in image noise while preserving underlying structures.
Finally, we notice that the reconstruction with NC-PDNet (G) is significantly blurry compared to other deep learning models.
Apparently, the NC-PDNet would require re-training at a new and more suitable noise level.

\begin{figure*}[!ht]
\centering
    \includegraphics[width=\linewidth]{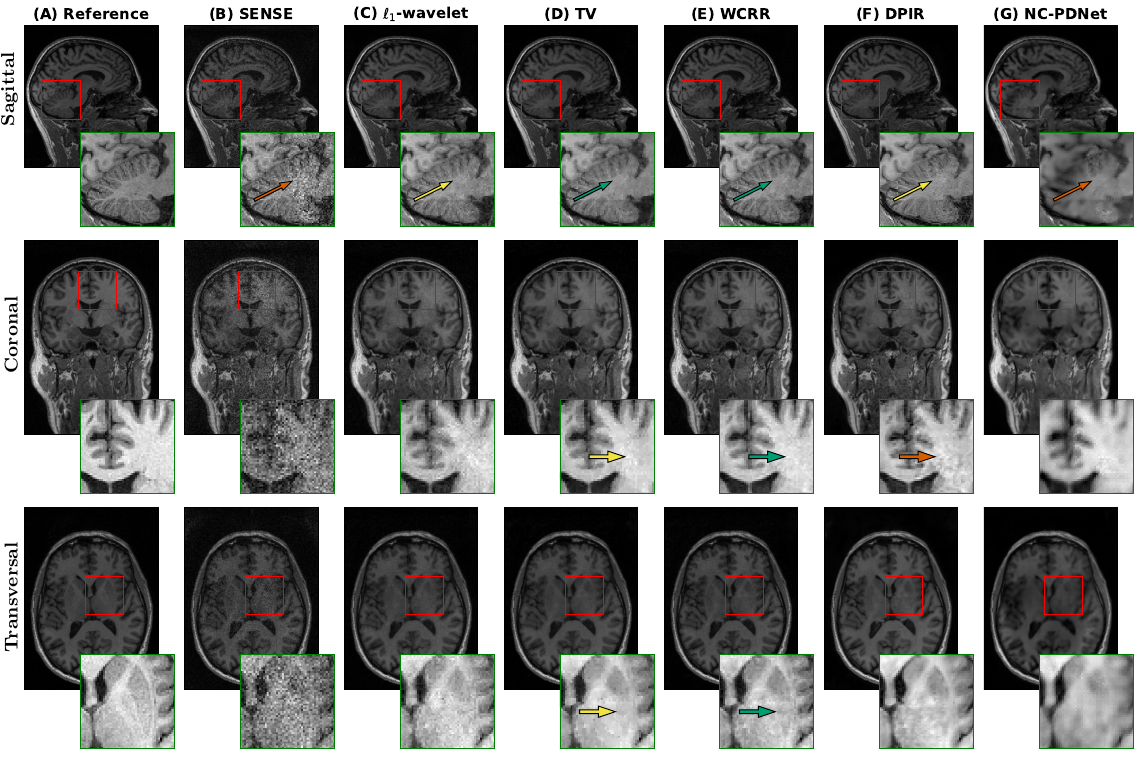}

\caption{Mid plane reconstructions for simulated CAIPIRINHA 3x2 accelerated acquisitions (1.5 min) with (B) SENSE, (C) $\ell_1$-wavelets, (D) TV, (E) WCRR, (F) DPIR and (G) NC-PDNet.
The corresponding ground truth reference (A) is also shown.
We showcase \emph{sagittal}, \emph{coronal} and \emph{transversal} views along with zoomed insets.
The green, yellow, and red arrows indicate good, okay, and bad behavior, respectively.}
\label{fig:prospective_caipi}
\end{figure*}

\paragraph{CAIPIRINHA 3x2}
We also evaluated the models in a highly accelerated Cartesian setting using a 1.5-minute accelerated acquisition with $3 \times 2$ CAIPIRINHA \cite{caipi} (with $\Delta = 1$).
Due to the absence of a corresponding sequence on the Siemens scanner, this acquisition was simulated by masking the fully sampled reference scan with a CAIPIRINHA undersampling pattern.
The results are given in Figure \ref{fig:prospective_caipi}.
Noteworthy, unlike for non-Cartesian retrospective simulations, this approach closely matches a real prospective scan (as argued in \cite{wavecaipi}) since the typical discrepancies of non-Cartesian data, such as NUFFT approximation errors, trajectory deviations, and off-resonance effects, are not present in this case.
Here, the conventional SENSE reconstruction (B) exhibits significantly higher non-uniform spatial noise. 
While the wavelet reconstruction (C) still showcases blocky discontinuities, the best overall performance is achieved by TV (D) and WCRR (E), with only WCRR retaining the fine structural details in the transversal view.
DPIR (F) fails to handle the non-uniform noise distribution, leaving visible residual noise in the sagittal view.
Finally, as NC-PDNet (G) was never trained with the CAIPIRINHA undersampling pattern, it fails with significant blurring and loss of contrast.
This showcases a typical problem of unrolled methods, namely that they require retraining if the evaluation protocol changes.

\section{Discussion and Conclusion}

We proposed WCRR for 3D MRI reconstruction, designed to balance reconstruction quality, computational efficiency, and algorithmic stability.
Its close connection with compressed sensing approaches provides interpretability and increases trustworthiness compared to black box deep-learning approaches, which are both important considerations for clinical translation.

In our simulations, WCRR provides consistent and robust reconstructions across varying coil configurations, which is particularly relevant in practice, where setups vary across scanners and sites.
Quantitatively, WCRR achieves reconstruction quality at least on par with leading learning-based baselines.
The qualitative evaluations demonstrate that WCRR preserves anatomical structures of clinical interest, including cortical boundaries and deep gray-matter regions, while effectively limiting noise amplification and residual aliasing.
Most importantly, we did not find it to hallucinate structure.
This is key in clinical workflows, where subtle structural differences may change the interpretation.

Below are two possible directions for future work. First, the rotation set $\mathcal{G}$ was deliberately kept small to control computational cost, and more expressive orientation-aware designs could be explored.
Second, replacing the constant vector $\mathbf{1}$ in \eqref{reg} by spatially varying weights, as done in \cite{neumayer2025stability} for 2D inverse problems, could significantly improve the reconstruction quality.

\section*{Compliance with ethical Standards}
This study was conducted retrospectively using human subject data made available in open access \cite{beauferris2022multi}. Ethical approval was not required as confirmed by the license attached with the open access data. All the prospective acquisitions were performed on a healthy volunteer and with approvals from local and national ethical committees for the protocol, and after a written consent was obtained.

\section*{Acknowledgment}
GSW and SN acknowledge support from the DFG
within the SPP2298 under the Project Number 543939932.
This work was granted access to the CCRT HPC facility under the Grant CCRT2026-tanaasma and CCRT2026-gilirach awarded by the Fundamental Research Division (DRF) of CEA. 
SN wants to thank Matthieu Terris for initiating this project through a discussion at BASP 2025.

\section*{Financial disclosure}

None reported.

\section*{Conflict of interest}

The authors declare no potential conflict of interests.

\normalsize
\bibliography{references}


\end{document}